\def\year{2026}\relax
\documentclass[letterpaper]{article} 
\usepackage{aaai25}  
\usepackage{times}  
\usepackage{helvet}  
\usepackage{courier}  
\usepackage[hyphens]{url}  
\usepackage{graphicx} 
\usepackage{booktabs}
\usepackage{array}
\usepackage{natbib}  
\usepackage{caption} 
\DeclareCaptionStyle{ruled}{labelfont=normalfont,labelsep=colon,strut=off} 
\usepackage{algorithm}
\usepackage{algpseudocode}
\usepackage{amsmath,amssymb,amsfonts}
\usepackage{amsthm}
\usepackage{xcolor}
\newcolumntype{M}[1]{>{\centering\arraybackslash}m{#1}}

\newtheorem{theorem}{Theorem}
\newtheorem{proposition}{Proposition}
\newtheorem{lemma}{Lemma}
\usepackage{newfloat}
\usepackage{listings}
\floatstyle{ruled}
\newfloat{listing}{tb}{lst}{}
\floatname{listing}{Listing}
\title{Provable Defense Framework for LLM Jailbreaks via Noise-Augumented Alignment}
\author{
    Zehua Cheng$^{1,3,*}$, Jianwei Yang$^{2,}$\footnote{Equal Contributions}, Wei Dai$^3$, and Jiahao Sun$^3$
    }
\affiliations{
    $^1$University of Oxford\\
    $^2$Lingnan University\\
    $^3$FLock.io\\
    \texttt{zehua.cheng@cs.ox.ac.uk}
}

\begin{document}

\maketitle

\begin{abstract}
Large Language Models (LLMs) remain vulnerable to adaptive jailbreaks that easily bypass empirical defenses like GCG. We propose a framework for certifiable robustness that shifts safety guarantees from single-pass inference to the statistical stability of an ensemble. We introduce Certified Semantic Smoothing (CSS) via Stratified Randomized Ablation, a technique that partitions inputs into immutable structural prompts and mutable payloads to derive rigorous $l_{0}$ norm guarantees using the Hypergeometric distribution. To resolve performance degradation on sparse contexts, we employ Noise-Augmented Alignment Tuning (NAAT), which transforms the base model into a semantic denoiser. Extensive experiments on Llama-3 show that our method reduces the Attack Success Rate of gradient-based attacks from 84.2\% to 1.2\% while maintaining 94.1\% benign utility, significantly outperforming character-level baselines which degrade utility to 74.3\%. This framework provides a deterministic certificate of safety, ensuring that a model remains robust against all adversarial variants within a provable radius.
\end{abstract}

\section{Introduction}
Large Language Models (LLMs) have achieved unprecedented capabilities in reasoning and generation, yet their deployment is severely hindered by their vulnerability to adversarial attacks~\cite{shayegani2023survey}. Despite extensive red teaming and safety alignment efforts~\cite{cheng2025weaponization}, LLMs remain susceptible to jailbreaks, an adversarial perturbation designed to bypass safety filters and elicit harmful content. The current landscape of defense strategies is largely dominated by empirical methods. While these defenses may mitigate specific known attacks, they often fail against adaptive adversaries, leading to a perpetual ``cat-and-mouse'' cycle where new attacks (such as gradient-based optimizers like GCG) rapidly obsolete existing defenses. Consequently, there is an urgent need to move from empirical resilience to certifiable robustness, providing mathematical guarantees that a model's safety is invariant within a defined perturbation budget.

The problem of achieving certified robustness for large language models (LLMs) has some unique theoretical and design challenges compared to other fields, such as computer vision. While randomized smoothing has successfully achieve $l_2$-norm certification for continuous image inputs, these schemes rely on additive Gaussian noise, which does not apply to discrete, high-dimensional spaces in language models. 
In an LLM, inputs exist in vocabulary $\mathcal{V}^L$, which involves discrete tokens, not continuous signals. It is most likely that an attack at the character level, or an attack at the token level, will violate semantic integrity either before reaching the robust radius or trigger an Out-Of-Distribution (OOD) transition. In addition, it does not apply to large language model interaction, in which system inputs, or chat templates, need to be constant to maintain the integrity of the application interface.

To bridge the gap between empirical resilience and certified safety, we introduce the concept of the Provable Defense Framework based on Certified Semantic Smoothing via Stratified Randomized Ablation. The concept of certified robustness for large language models is challenging in a different way than it is in the field of computer vision. Classical randomized smoothing can be used to certify the robustness of image classifiers by adding Gaussian noise; however, the concept of using a continuous noise distribution is not well-defined in the high-dimensional discrete space in which language models reside. Additionally, simplistic applications of character-level noise can easily compromise the semantic consistency or induce an Out-Of-Distribution shift before reaching the safety boundary in the robustified space. By contrast to the additive noise paradigm, our solution stochastically ablates/masks the semantic payload while deterministically preserving the structural support set. For the purpose of clarity, we term our solution Stratified Randomized Ablation. The solution takes note of the strict semantic hierarchy present in the large language model interactions; the application system prompts and chat templates must be fixed in order to maintain the application interface, while the adversarial attack is contained in the variable query input.

Crucially, we point out that the variance of the standard LLM on sparse and removed inputs is considerable when compared to dense inputs. We find that the apparent improvement of the theory of robustness is directly linked to the degradation of the practical utility, and therefore, the phenomenon is named the inverted scaling fallacy. In order to resolve the problem, the Noise-Augmented Alignment Tuning (NAAT) method is proposed, which is a fine-tuning technique that aligns the representation with the smoothing distribution, making the LLM a semantic denoiser.

\begin{figure*}[t]\centering
  \includegraphics[width=0.93\textwidth]{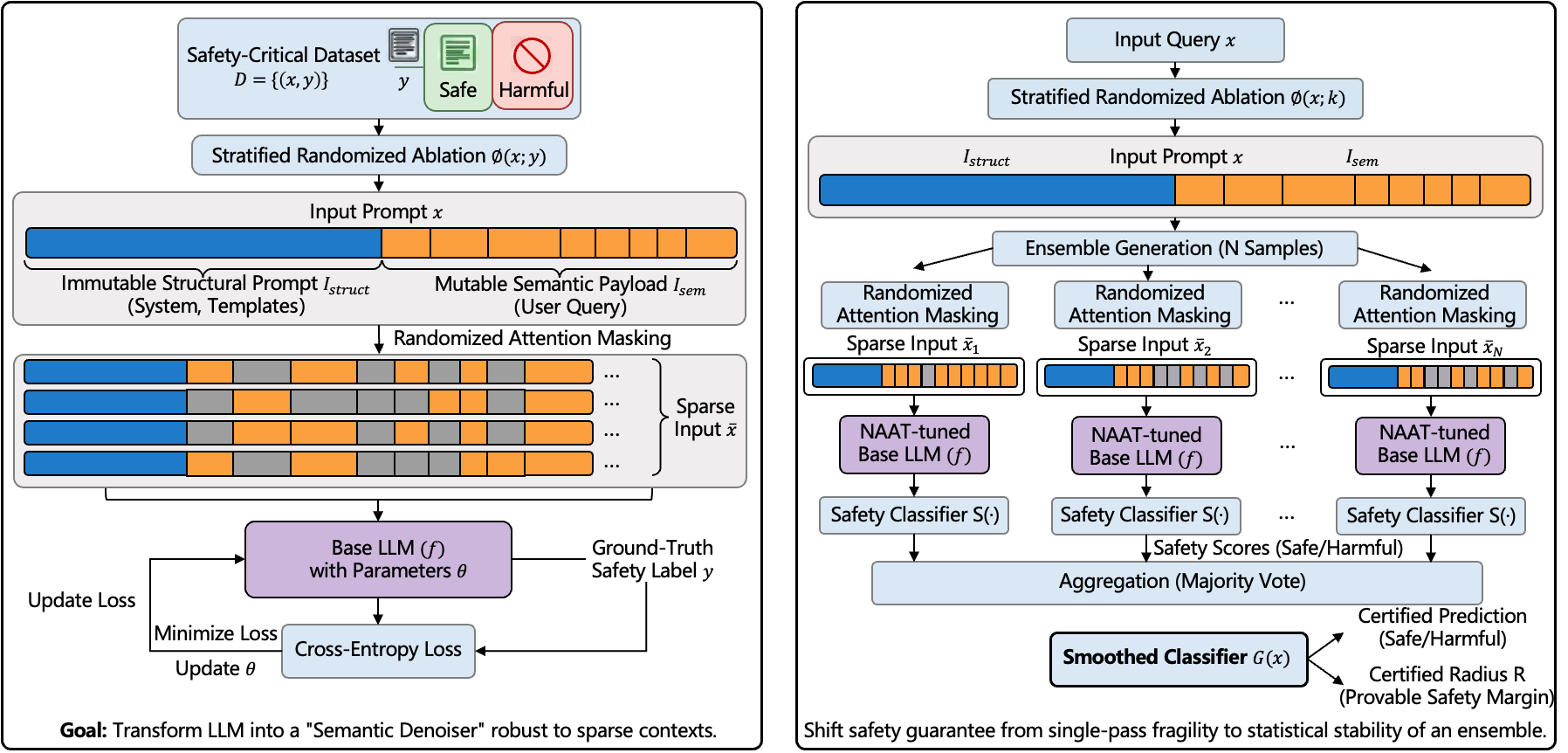}
  \caption{Part 1: Noise-Augmented Alignment Tuning (NAAT) illustrates the fine-tuning protocol designed to resolve performance degradation caused by inference on sparse contexts. By training the base model on inputs subjected to stratified randomized ablation, NAAT transforms the LLM into a semantic denoiser robust to information sparsity, capable of reconstructing intent from partial evidence. Part 2: Certified Semantic Smoothing (CSS) depicts the inference process. Inputs are partitioned into immutable structural support sets ($I_{struct}$) and mutable semantic payloads ($I_{sem}$). An ensemble of ablated inputs created via randomized attention masking is processed by the NAAT-tuned base LLM. The results are aggregated via majority vote to form a smoothed classifier $G(x)$ that provides a deterministic certificate of safety (Certified Radius $R$) against $l_0$ norm perturbations, shifting the safety guarantee to the statistical stability of the ensemble.}
\end{figure*}
Our contributions are threefold:
\begin{enumerate}
  \item We propose CSS with Stratified Randomized Ablation, a mechanism that respects the autoregressive nature of Transformers by using randomized attention masking rather than token deletion, preserving positional integrity.
  \item We derive a rigorous certification radius for discrete token substitution ($l_0$ norm) based on the Hypergeometric distribution, correcting scaling laws misapplied in prior heuristic defenses.
\end{enumerate}

\section{Related Works}

\subsection{Adversarial Jailbreaks of LLMs}
The deployment of LLMs has let to the rise of jailbreaks~\cite{yi2024jailbreak,dong2025safeguarding} where adversarial crafted perturbations designed to circumvent safety alignment to generate dangerous outputs.
Initially, these attacks were manual and heuristic in nature, relying on role-playing or jailbreak templates to fool the model. The threat landscape shifted dramatically with the advent of automated, gradient-based optimization attacks such as Greedy Coordinate Gradient (GCG)~\cite{zou2023universal}, and genetic algorithms like AutoDAN~\cite{liuautodan}, which effectively discovered adversarially perturbed suffixes in high-dimensional representation spaces. To this end, there has also been an effort in the community to develop cutting-edge alignment techniques to improve robustness to such adversarially ``weaponized'' uses of these models. Notably, \citet{cheng2025weaponization} put forward an adversarially robust defense mechanism to mitigate risks of misuse by aligning model goals to emphasize utility over log-likelihood maximization. While there have been these dramatic advances in robust alignment techniques and empirical defense methods such as perplexity filters and ``erase-and-check'' methods, these methods tend to be merely empirical in nature rather than mathematically sound. It involves a ``cat and mouse'' game where an intelligent attacker finds ways to bypass such new safeguards developed by rapidly adapting to them, and therefore, it becomes an utmost priority to design new methods that ensure deterministic guarantees of safety.

\subsection{Certified Robustness via Randomized Smoothing}
To establish rigorous safety guarantees, the machine learning community has increasingly turned to Randomized Smoothing, a technique originally pioneered for computer vision. \citet{cohen2019certified} demonstrated that by convolving a base classifier with additive Gaussian noise, one can derive a smoothed classifier that is certifiably robust against $l_2$-norm perturbations up to a deterministically calculable radius. This framework leverages the Neyman-Pearson Lemma~\cite{neyman1933ix} to convert statistical stability under noise into a geometric certificate of robustness. However, the direct translation of this continuous theory to the discrete domain of Natural Language Processing presents fundamental theoretical hurdles. LLMs operate on a discrete vocabulary $\mathcal{V}$ rather than a continuous manifold, rendering the standard Gaussian noise assumptions ill-defined for token inputs. Consequently, naive applications of continuous smoothing fail to capture the structural constraints of language, necessitating a reformulation of certification protocols that respect the discrete and autoregressive nature of text generation.

\subsection{Discrete Smoothing and the Semantic Gap}
Addressing the discrete nature of language, recent works have attempted to adapt smoothing techniques by introducing noise at the character or token level. The most prominent baseline, SmoothLLM~\cite{robey2025smoothllm} employs a mechanism of random character substitutions, insertions, and deletions to certify robustness against localized edits. While this approach effectively mitigates strictly character-based perturbations, it suffers from a critical "inverted scaling" trade-off: high levels of character noise frequently destroy the semantic coherence of the input prompt, causing the model's utility to collapse before a meaningful certification radius can be established. 

Our work bridges this gap by introducing Certified Semantic Smoothing (CSS), which shifts the noise mechanism from character corruption to Stratified Randomized Ablation at the token level. Unlike prior methods that treat all tokens uniformly, we partition inputs into immutable structural prompts and mutable semantic payloads, ensuring that the certification process respects the rigid formatting requirements of modern instruction-tuned models. Furthermore, we explicitly address the utility degradation inherent in sparse inputs through Noise-Augmented Alignment Tuning (NAAT), which aligns the base model's representations with the smoothing distribution, effectively transforming the LLM into a semantic denoiser robust to information sparsity.

\section{Problem Formulation}
We frame the challenge of securing Large Language Models (LLMs) against adversarial attacks as a problem of certifiable robustness within a discrete, high-dimensional state space. Unlike computer vision, where inputs are continuous and adversarial perturbations are modeled as $\ell_p$-norm bounded noise, language models operate on a discrete domain $\mathcal{V}^L$, where $\mathcal{V}$ is the vocabulary and $L$ is the sequence length. We consider an aligned LLM $f: \mathcal{V}^L \rightarrow \Delta^{|\mathcal{V}|}$ and a safety classifier $S$ derived from $f$ that maps inputs to a binary label space $\mathcal{Y} = \{+1 (\text{Refusal}), -1 (\text{Compliance})\}$.

The adversary's objective is to find a perturbation $\delta$ such that for a benign input $x$, the perturbed input $x' = x \oplus \delta$ elicits a harmful response ($S(x') = -1$). We adopt a semantic threat model where the adversary is constrained by the $\ell_0$ norm, representing the number of token substitutions, insertions, or deletions. Specifically, we assume the input $x$ can be partitioned into a structural support set $I_{struct}$ (comprising system prompts, chat templates, and delimiters) and a semantic payload set $I_{sem}$. The adversary has full white-box access to the model and can modify tokens arbitrarily within $I_{sem}$ subject to $||\delta||_0 \le R$, but cannot alter the immutable structural template enforced by the application interface.

Our goal is to construct a smoothed classifier $G(x)$ that provides a deterministic certificate of robustness. We seek to guarantee that for a given input $x$, $G(x') = G(x)$ for all $x'$ such that $||x_{sem} - x'_{sem}||_0 \le R$. This shifts the burden of safety from the empirical resilience of a single forward pass—which is vulnerable to gradient-based optimizers like GCG—to the statistical stability of an ensemble over the input space. By proving that the probability mass of the benign class dominates the adversarial signal under a specific stochastic transformation, we can certify safety regardless of the optimization method used to construct the attack.

\section{Methodology}

\subsection{Certified Semantic Smoothing}
To achieve certified guarantees in the discrete token space, we bring forward Certified Semantic Smoothing (CSS) via Stratified Randomized Ablation. Existing randomized smoothing approaches that require additive Gaussian perturbations are not applicable to discrete token spaces, and character-level perturbations often violate semantic consistency before a robust radius can be determined. To address this issue, we suggest an ablation-based smoothing strategy that operates on the token sequence with autoregressive properties characteristic of transformers.

We define a stochastic ablation function $\phi(x; k)$ that maps an input sequence $x$ to a subsampled version $\tilde{x}$. Crucially, to prevent the "Structure-Preserving Oracle Paradox"—where ablated structural tokens cause model collapse—we employ a stratified sampling strategy. The function $\phi$ preserves all tokens indexed by $I_{struct}$ with probability 1.0. For the semantic tokens $I_{sem}$, we uniformly sample a subset of size $k$ without replacement. The smoothed classifier $G(x)$ is defined as the class with the maximum expected probability under this distribution:
\begin{equation}
  G(x) = \arg\max_{c \in \mathcal{Y}} \mathbb{P}(S(\phi(x; k)) = c)
\end{equation}

The implementation of token ablation in Causal LLMs requires careful architectural consideration to avoid Out-Of-Distribution (OOD) shifts. Naively replacing tokens with a \texttt{[MASK]} symbol degrades performance in autoregressive models that were never pre-trained with such symbols. Similarly, explicitly deleting tokens disrupts the relative positional embeddings (e.g., Rotary Embeddings), causing the model to misinterpret the sequence order. To resolve this, we implement $\phi(x; k)$ via Randomized Attention Masking. We construct a specialized attention mask $M \in \{0, 1\}^{L \times L}$ for the forward pass. If a semantic token $t_i$ is not selected for retention, we set $M_{:, i} = 0$, effectively preventing any future tokens from attending to it. This approach preserves the positional indices of the retained tokens and maintains the structural integrity of the prompt without introducing foreign tokens into the vocabulary. We present theoretical guarantee in Appendix.

\subsection{Noise-Augmented Alignment Tuning}
While Randomized Attention Masking is architecturally sound, standard LLMs are trained on dense, contiguous context windows. They exhibit high variance and hallucination when forced to infer intent from sparse, ablated contexts (e.g., retaining only 50\% of tokens). A classifier with low accuracy on benign ablated inputs ($\underline{p_A} \approx 0.5$) yields a certified radius of zero. Therefore, to make CSS practically viable, we must align the base model's representation with the smoothing distribution.

We introduce Noise-Augmented Alignment Tuning (NAAT), a fine-tuning protocol designed to robustify the base model against information sparsity. We curate a dataset of safety-critical prompts labeled with their ground-truth safety status. During training, we dynamically apply the stratified ablation function $\phi(x; k)$ to the inputs, where $k$ is drawn from a uniform distribution over possible retention rates. The model is optimized to minimize the cross-entropy loss of the safety label (or the refusal response) given the sparse input.

\begin{equation}\small
  \mathcal{L}_{\text{NAAT}}(\theta) = \mathbb{E}_{(x, y) \sim \mathcal{D}} \mathbb{E}_{k} \; \mathbb{E}_{\tilde{x} \sim \phi(x;k)} [-\log P_{\theta}(y | \tilde{x})]
\end{equation}

This process fundamentally alters the model's inference capabilities. It transforms the model into a semantic denoiser, reconstructing the intent of the query from partial evidence. By explicitly training on the noise distribution used for certification, we maximize the expected probability of the correct class $\underline{p_A}$, which is the primary variable driving the magnitude of the certified radius. Importantly, this fine-tuning does not learn to defeat specific attacks; rather, it learns to ignore high-frequency perturbations by focusing on robust, distributed semantic features.

\subsection{Derivation of the Certified Radius}
We now derive the rigorous robustness certificate for CSS. We correct the scaling laws often misapplied in heuristic defenses by strictly adhering to the properties of the Hypergeometric distribution, which governs sampling without replacement from finite populations.

Let $N = |I_{sem}|$ be the number of semantic tokens. The adversary modifies $r$ of these tokens. When we draw a random subset of size $k$, the number of adversarial tokens captured in our sample, denoted by the random variable $Z$, follows the Hypergeometric distribution $H(N, r, k)$. The smoothed classifier is robust if the probability of sampling a clean subset (one with zero or sufficiently few adversarial tokens) is high enough to ensure the benign class maintains a majority.

We define the certified radius $R$ as the maximum number of token edits $r$ such that the smoothed classifier's decision is theoretically guaranteed to remain stable. Based on the Neyman-Pearson lemma~\cite{neyman1933ix} for discrete subsampling, if we observe a lower bound on the probability of the top class $\underline{p_A}$ and an upper bound on the runner-up class $\overline{p_B}$ (where $\underline{p_A} > \overline{p_B}$), the radius $R$ is the largest integer satisfying:

\begin{equation}
  \underline{p_A} - \overline{p_B} > 1 - 2 \cdot P(Z=0; N, R, k)
\end{equation}

Where $P(Z=0) = \frac{\binom{N-R}{k}}{\binom{N}{k}}$ is the probability that none of the $R$ adversarial tokens are included in a sample of size $k$.

This inequality elucidates the fundamental trade-off in ablation-based smoothing. As the sample size $k$ increases, the model's utility (accuracy) generally improves because it sees more context. However, the probability $P(Z=0)$ decreases, because it becomes statistically nearly certain that the sample will include the adversarial trigger. Conversely, a small $k$ maximizes the probability of excluding the adversary (increasing theoretical robustness) but challenges the model's ability to classify correctly (decreasing $\underline{p_A}$). The optimal $k$ is found via grid search on a validation set to maximize the volume of the certified region. This formulation explicitly corrects the "inverted scaling" fallacy; deterministic models ($k=N$) have $P(Z=0)=0$ for any $R \ge 1$, resulting in no certified robustness, which aligns with the empirical fragility of standard LLMs.

\begin{algorithm}[t]
\caption{Certified Semantic Smoothing Inference}
\begin{algorithmic}[1]\label{algo:cssi}
\Require Input prompt $x$, NAAT-tuned Base LLM $f$, Number of smoothing samples $N$, Retention parameter $k$, Significance level $\alpha$
\Ensure Prediction $\in \{\text{Safe, Harmful, Abstain}\}$, Certified Radius $R$

\State Partition $x$ into $I_{\text{struct}}$ and $I_{\text{sem}}$. Let $L_{\text{sem}} \leftarrow |I_{\text{sem}}|$.
\State Compute KV-cache for $x[I_{\text{struct}}]$.
\State Initialize counts $C_{\text{safe}} \leftarrow 0$, $C_{\text{harm}} \leftarrow 0$.

\For{$n = 1$ to $N$}
    \State Generate attention mask $M$:
    \State \quad $M[I_{\text{struct}}] \leftarrow 1$
    \State \quad $M[\text{RandomSample}(I_{\text{sem}}, k)] \leftarrow 1$
    \State $y_{\text{pred}} \leftarrow f(x, \text{mask}=M, \text{past\_key\_values}=\text{KV\_cache})$
    \State Update counts $C_{\text{safe}}, C_{\text{harm}}$ based on $y_{\text{pred}}$.
\EndFor

\State $\hat{p}_A \leftarrow \max(C_{\text{safe}}, C_{\text{harm}}) / N$
\State $p_{\text{lower}} \leftarrow \text{ClopperPearsonLower}(\hat{p}_A, N, \alpha)$
\State $p_{\text{upper}} \leftarrow 1.0 - p_{\text{lower}}$

\If{$p_{\text{lower}} \leq p_{\text{upper}}$}
    \State \Return Abstain, $0$
\EndIf

\State Calculate Radius $R$:
\State \quad Find $\max R$ such that:
\State \quad $p_{\text{lower}} - p_{\text{upper}} > 1 - 2 \cdot \text{HypergeomPMF}(0; L_{\text{sem}}, R, k)$

\State \Return $\text{argmax}(\text{counts}), R$
\end{algorithmic}
\end{algorithm}

\subsection{Inference and Certification Algorithm}
The practical execution of CSS involves a Monte Carlo estimation procedure to bound the probabilities $p_A$ and $p_B$. Since we cannot evaluate all $\binom{N}{k}$ possible masks, we rely on statistical concentration inequalities. We utilize the Clopper-Pearson method to derive rigorous one-sided confidence intervals at a significance level $\alpha$ (typically 0.001).

To avoid the Metric Hacking trap, our framework includes a strict Abstention criteria. If the estimated margin $\underline{p_A} - \overline{p_B}$ is insufficient to certify a radius of at least $R=0$ (i.e., the prediction is not statistically significant even for benign data), the model outputs $\bot$ (Abstain). Certified Accuracy is reported strictly as the fraction of inputs where the model is correct and the certificate holds.

Algorithm~\ref{algo:cssi} details the complete inference pipeline. We employ KV-cache optimization to reduce the computational overhead. Since the structural tokens $I_{struct}$ are deterministically retained, their Key and Value states are computed once and broadcast across the batch of $N$ ablation samples. Only the sparse semantic tokens require unique computation, significantly reducing the FLOPs per certified query.

\section{Theoretical Guarantee}
In this section, we strictly establish the mathematical validity of Certified Semantic Smoothing (CSS). We derive the robustness certificate by analyzing the statistical properties of the Stratified Randomized Ablation mechanism $\phi(x; k)$. Unlike randomized smoothing in continuous domains (e.g., Computer Vision), which leverages the Neyman-Pearson lemma~\cite{neyman1933ix} on Gaussian measures to certify $\ell_2$ balls, our framework operates in the discrete domain. We leverage the Hypergeometric distribution arising from sampling without replacement to certify exact $\ell_0$ robustness regions.

\subsection{Preliminaries}
Let the input space be $\mathcal{X} = \mathcal{V}^L$, where $\mathcal{V}$ is the vocabulary and $L$ is the sequence length. As defined in the Methodology, the index set is partitioned into disjoint sets $I_{struct}$ and $I_{sem}$, with $N = |I_{sem}|$ denoting the dimensionality of the semantic subspace.

We define the Semantic $\ell_0$ distance between two inputs $x, x'$ as:
$$ |x - x'|{0, sem} := \sum{i \in I_{sem}} \mathbb{I}(x_i \neq x'i) $$
We assume the structural tokens are immutable, such that $x{I_{struct}} = x'{I{struct}}$. The adversary is constrained to perturbations within the semantic payload satisfying $\|x - x'\|_{0, sem} \le R$.

The smoothed classifier $G(x)$ is defined as the argmax of the expected probability of the base classifier $f$ under the ablation distribution:
$$ G(x) \triangleq \arg\max_{c \in \mathcal{Y}} p_c(x), \quad \text{where } p_c(x) \triangleq \mathbb{P}_{\tilde{x} \sim \phi(x; k)}[f(\tilde{x}) = c] $$

\subsection{Structural Independence and Masked Invariance}
Our certification relies on two fundamental properties that link the discrete ablation process to the causal attention mechanism of the Transformer.

Let $x, x' \in \mathcal{V}^L$ be two inputs and $M \in \{0, 1\}^L$ be a binary mask. If for all indices $i$ where $M_i = 1$, we have $x_i = x'_i$, then:
$$ f(x, M) = f(x', M) $$
Remark: This axiom holds strictly for Causal Transformers where setting $M_i=0$ forces the attention weights $A_{j,i} = 0$ for all queries $j$, mathematically severing the functional dependence of the output on the token $x_i$.

\begin{lemma}[The Common Substructure Coupling]\label{lemma:common}
  Let $x, x' \in \mathcal{X}$ be two inputs such that $\|x - x'\|_{0, sem} = r$. Let $\mathcal{M}_k$ be the space of all valid ablation masks preserving $I_{struct}$ and retaining $k$ tokens from $I_{sem}$. There exists a coupling between the random variables $\tilde{x} \sim \phi(x; k)$ and $\tilde{x}' \sim \phi(x'; k)$ such that $\tilde{x} = \tilde{x}'$ with probability:
$$ \rho(N, r, k) = \frac{\binom{N-r}{k}}{\binom{N}{k}} $$
\end{lemma}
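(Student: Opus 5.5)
The natural coupling uses the same mask for both inputs. We sample a single mask $M$ uniformly from $\mathcal{M}_k$: keep all of $I_{struct}$ and a uniformly random $k$-subset $S \subseteq I_{sem}$. We then set $\tilde{x} = (x, M)$ and $\tilde{x}' = (x', M)$, i.e.\ the ablated views of $x$ and $x'$ under this one mask. The ablation $\phi(\cdot;k)$ draws its mask independently of the input content. Hence each marginal is exactly the law of $\phi(x;k)$ and $\phi(x';k)$ respectively, so this is a valid coupling. This first step is just bookkeeping: $\phi$ depends on the input only through which positions are retained, and $I_{struct}, I_{sem}$ are identical for $x$ and $x'$ because structural tokens are immutable.

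Next, let $D = \{ i \in I_{sem} : x_i \neq x'_i \}$ be the set of differing positions, so $|D| = r$. We show that the event $S \cap D = \emptyset$ implies $\tilde{x} = \tilde{x}'$, in the sense that the two ablated objects agree on every retained index. On $I_{struct}$ they agree by immutability. On $S$ they agree because $S$ avoids $D$. By the masked-invariance axiom stated just before the lemma, agreement on retained indices also gives $f(x,M) = f(x',M)$, which is the form actually used downstream. Strictly, we identify an ablated input with its restriction to the retained positions together with those positions. Under that identification, $\tilde{x} = \tilde{x}'$ holds exactly when $S \cap D = \emptyset$.

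Finally, we compute $\mathbb{P}(S \cap D = \emptyset)$. Since $S$ is uniform over the $\binom{N}{k}$ subsets of $I_{sem}$, the favorable subsets are exactly the $k$-subsets of $I_{sem} \setminus D$, and there are $\binom{N-r}{k}$ of them (zero when $k > N-r$). This gives $\rho(N,r,k) = \binom{N-r}{k}/\binom{N}{k}$, which is $P(Z=0)$ for $Z \sim H(N,r,k)$.

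The one delicate point is the claim that equality holds \emph{with probability} $\rho$ rather than at least $\rho$. Under our identification, if $S$ contains some $i \in D$, then the retained value at $i$ differs between $x$ and $x'$, so $\tilde{x} \neq \tilde{x}'$. The event of equality therefore coincides exactly with $S \cap D = \emptyset$, and the probability is exactly $\rho$. For the certificate only the lower bound $\ge \rho$ matters, and I would state that as the operative conclusion.
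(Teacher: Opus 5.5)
Your proposal is correct and follows the paper's proof: couple the two ablations through a shared uniformly random $k$-subset $S\subseteq I_{sem}$, note that $S\cap D=\emptyset$ forces $\tilde{x}=\tilde{x}'$, and count the $\binom{N-r}{k}$ favorable subsets out of $\binom{N}{k}$. Two things you add make precise what the paper leaves implicit: you check the marginals explicitly, and you identify an ablated input with its retained positions. That identification is what gives the probability as exactly $\rho$, since under attention masking the full sequences $x$ and $x'$ still differ.
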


\noindent \textit{Proof.} The ablation mechanism selects a subset of indices $S \subset I_{sem}$ of size $k$ uniformly at random. Let $D = \{i \in I_{sem} \mid x_i \neq x'_i\}$ be the set of indices where the inputs differ, with $|D| = r$.
If the sampled set $S$ satisfies $S \cap D = \emptyset$, then for all selected indices $j \in S$, we have $x_j = x'_j$. Since structural tokens are fixed $x_{I_{struct}} = x'_{I_{struct}}$ and deterministically retained, the resulting ablated inputs are identical: $\tilde{x} = \tilde{x}'$.
The probability of sampling $k$ items from $N$ items such that none of the $r$ specific items are selected is given by the Hypergeometric Probability Mass Function for zero successes ($Z=0$):
$$ P(S \cap D = \emptyset) = \frac{\binom{N-r}{k}\binom{r}{0}}{\binom{N}{k}} = \frac{\binom{N-r}{k}}{\binom{N}{k}} $$
This probability defines the mass of the shared support between the distributions.\qed

\subsection{The Robustness Certification Theorem}
We now state the main theorem. We show that if the margin between the top class and the runner-up is sufficiently large, the probability mass shifted by the adversary is insufficient to flip the prediction.

\begin{theorem}[Finite-Sample Certified Radius]\label{theorem:fscr}
  Let $f$ be an arbitrary base classifier. For a benign input $x$, let $\underline{p_A}$ be a lower bound on the probability of the top class $c_A$, and $\overline{p_B}$ be an upper bound on the probability of the runner-up class $c_B$. The smoothed classifier $G$ is provably robust (i.e., $G(x') = c_A$) against any perturbation of size $r$ if:
  $$ \underline{p_A} - \overline{p_B} > 1 - 2 \cdot \frac{\binom{N-r}{k}}{\binom{N}{k}} $$
\end{theorem}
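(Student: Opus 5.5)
The plan is a coupling argument built on Lemma~\ref{lemma:common}, together with the masked-invariance axiom, in the same spirit as the Neyman--Pearson style argument for continuous smoothing.

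Fix $x'$ with $\|x - x'\|_{0,sem} = r$, and let $D$ be the set of differing semantic indices. Let $\rho = \binom{N-r}{k}/\binom{N}{k}$.

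\textbf{Coupling.} Take the natural coupling from Lemma~\ref{lemma:common}. Draw a single retained set $S\subset I_{sem}$ of size $k$ uniformly at random, and use the same mask $M_S$ for both $x$ and $x'$. This is a valid coupling because both marginals are the uniform distribution over $\mathcal{M}_k$.

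\textbf{Agreement on the good event.} On the event $E=\{S\cap D=\emptyset\}$, every retained index carries the same token in $x$ and $x'$. The masked-invariance axiom then gives $f(x,M_S)=f(x',M_S)$. By Lemma~\ref{lemma:common}, $\mathbb{P}(E)=\rho$.

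\textbf{Bounding the class probabilities at $x'$.} For any class $c$ we have
$$ \mathbb{P}[f(x',M_S)=c] \ge \mathbb{P}[f(x,M_S)=c,\,E] \ge p_c(x) - \mathbb{P}(E^c) = p_c(x) - (1-\rho). $$
Similarly,
$$ \mathbb{P}[f(x',M_S)=c] \le \mathbb{P}[f(x,M_S)=c,\,E] + \mathbb{P}(E^c) \le p_c(x) + (1-\rho). $$
Applying these with the given bounds yields
$$ p_{c_A}(x') \ge \underline{p_A} - (1-\rho) \quad\text{and}\quad p_{c}(x') \le \overline{p_B} + (1-\rho) \ \text{for every } c\neq c_A. $$
Here I use that $\overline{p_B}$ bounds every runner-up class. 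In the binary setting $\mathcal{Y}=\{\pm1\}$ this is automatic, since there is only one other class.

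\textbf{Conclusion.} Taking the difference,
$$ p_{c_A}(x') - p_c(x') \ge \underline{p_A}-\overline{p_B} - 2(1-\rho), $$
which is strictly positive exactly under the hypothesis $\underline{p_A}-\overline{p_B} > 1-2\rho$. Hence $G(x')=c_A$. The argument never uses the structure of $f$ beyond the invariance axiom, so it holds for an arbitrary base classifier, as the theorem states.

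\textbf{Main obstacle.} The delicate step is the agreement claim on $E$. It requires the masked-invariance axiom to hold for the attention-mask implementation of $\phi$, so that masked tokens have no influence on the output at all, and it requires the structural tokens to be identical and always retained. I will invoke the axiom as stated in the paper rather than re-derive it.

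A second, smaller subtlety concerns the word ``arbitrary.'' If $f$ is randomized, for example through sampling, the bounds above still go through once $f$'s internal randomness is shared across $x$ and $x'$, or once the argument is applied to $p_c$ as an expectation over that randomness.

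Finally, the deterministic statement certifies $G$ only given valid bounds $\underline{p_A}$ and $\overline{p_B}$. When these are Clopper--Pearson estimates, the guarantee holds with probability $1-\alpha$; I will note this but keep it outside the proof.
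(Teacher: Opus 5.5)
Your coupling, the use of Lemma~\ref{lemma:common}, and the two bounds $p_{c_A}(x')\ge\underline{p_A}-(1-\rho)$ and $p_c(x')\le\overline{p_B}+(1-\rho)$ are correct, and they are exactly the paper's argument. The gap is your last line. The inequality $\underline{p_A}-\overline{p_B}-2(1-\rho)>0$ holds iff $\underline{p_A}-\overline{p_B}>2-2\rho$, not $1-2\rho$. Under the stated hypothesis your bound only yields $p_{c_A}(x')-p_c(x')>-1$, which says nothing.

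No sharper argument can close this, because the theorem as displayed is false. Use the Trojan classifier from the paper's tightness discussion: $f$ returns $c_B$ iff some retained semantic position holds a trigger token, and $c_A$ otherwise. This depends only on retained tokens, so masked invariance holds. Let $x$ contain no trigger, and let $x'$ place it at the $r$ positions of $D$. Take $N=10$, $k=5$, $r=2$, so that $\rho=\binom{8}{5}/\binom{10}{5}=2/9$.
\begin{itemize}
\item On $x$: $p_{c_A}(x)=1$, so $\underline{p_A}=1$ and $\overline{p_B}=0$ are valid bounds, and $1>1-2\rho=5/9$ satisfies the hypothesis.
\item On $x'$: $p_{c_A}(x')=\rho=2/9<7/9=p_{c_B}(x')$, so $G(x')=c_B$.
\end{itemize}
More generally, for any $0<\rho\le 1/2$ the adversary controls at least half of the ablation mass and nothing is certifiable. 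Yet the displayed hypothesis is satisfiable whenever $\rho>0$.

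What your argument does prove is robustness under $\underline{p_A}-\overline{p_B}>2(1-\rho)$; in the binary case this reads $\underline{p_A}-(1-\rho)>1/2$. The same Trojan example shows this condition is tight. The repair is to change the constant $1$ in the statement to $2$, after which your proof goes through verbatim.

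The paper's own proof contains the identical slip. It derives $\underline{p_A}-\overline{p_B}>2(1-\rho)$ and then asserts that substituting $\rho$ ``yields the inequality in the theorem statement.'' Its Sparse Regime proposition uses the correct $2-2(1-R/N)^k$. Algorithm~1, however, sets $p_{\text{upper}}=1-p_{\text{lower}}$ and so certifies whenever $p_{\text{lower}}>1-\rho$, which is the over-optimistic version.
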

\noindent \textit{Proof.}
Consider the adversarial input $x'$ where $\|x - x'\|_{0, sem} = r$.
We decompose the probability of the top class $p_A(x')$ using the event $E = \{ \tilde{x} = \tilde{x}' \}$ derived in Lemma~\ref{lemma:common}. Note that $P(E) = \rho(N, r, k)$.
\begin{equation}\small
  \begin{aligned}
    p_A(x') &= P(f(\mathbf{\tilde{x}}) = c_A | E)P(E) \\
    &+ P(f(\tilde{x}') =c_A | E^c)P(E^c)
  \end{aligned}
\end{equation}
To certify robustness, we must assume the worst-case scenario. The adversary seeks to minimize $p_A(x')$ and maximize $p_B(x')$.

\textbf{Minimizing} $p_A(x')$: The adversary assumes that in the non-coupled region ($E^c$), the probability of class $c_A$ is 0. Furthermore, we assume that all classification errors on the benign input $x$ occurred within the coupled region $E$. The maximum possible drop in probability mass is the total mass of the non-coupled region:
$$ p_A(x') \ge \underline{p_A} - (1 - \rho) $$

\textbf{Maximizing} $p_B(x')$: Similarly, the adversary can at best shift all probability mass from the non-coupled region to the runner-up class $c_B$:$$ p_B(x') \le \overline{p_B} + (1 - \rho) $$

For the prediction to remain stable, we require $p_A(x') > p_B(x')$. Substituting the bounds:
$$ \underline{p_A} - (1 - \rho) > \overline{p_B} + (1 - \rho) $$
$$ \underline{p_A} - \overline{p_B} > 2(1 - \rho) $$
Substituting $\rho = \frac{\binom{N-r}{k}}{\binom{N}{k}}$ yields the inequality in the theorem statement.\qed

\subsection{Analysis of Asymptotic Properties}
The derived inequality elucidates the Sparsity-Robustness Trade-off (SRT), which is unique to discrete ablation and distinct from Gaussian smoothing in $\mathbb{R}^d$.

\begin{proposition}[The Deterministic Collapse]
  As $k \to N$ (approaching deterministic inference), the term $\binom{N-R}{k} / \binom{N}{k}$ converges to 0 for any $R \ge 1$. Consequently, the required margin becomes $\underline{p_A} - \overline{p_B} > 1$, which is impossible. This theoretically confirms that deterministic LLMs have zero certified robustness against token substitution.
\end{proposition}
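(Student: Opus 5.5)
The plan is to compute $\rho(N,R,k)=\binom{N-R}{k}/\binom{N}{k}$, from Lemma~\ref{lemma:common}, explicitly and then feed the limit into the condition of Theorem~\ref{theorem:fscr}. The ratio telescopes into a product:
$$ \rho(N,R,k) = \prod_{j=0}^{k-1} \frac{N-R-j}{N-j}, $$
valid whenever $k \le N-R$. For $k > N-R$ we have $\binom{N-R}{k}=0$, so $\rho=0$. Since $k$ is an integer in $\{0,\dots,N\}$, the limit ``$k\to N$'' is best read as the eventual regime $k \ge N-R+1$. Fix $R\ge 1$. Once $k>N-R$, which happens at $k=N$ and already at $k=N-R+1$, pigeonhole forces any $k$-subset of the $N$ semantic positions to hit the $R$ perturbed positions. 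So $\rho=0$ exactly, and in particular at $k=N$ the convergence is trivially attained. I would also note monotonicity: each factor $(N-R-j)/(N-j)<1$, so $\rho$ is nonincreasing in $k$. This shows that the collapse is approached monotonically and is not an artifact of the endpoint.

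Plugging $\rho=0$ into Theorem~\ref{theorem:fscr}, the sufficient condition becomes $\underline{p_A}-\overline{p_B}>1$. Since $\underline{p_A}\le p_A\le 1$ and $\overline{p_B}\ge p_B\ge 0$ are valid bounds on probabilities, the left side is at most $1$, so the condition can never be met. Hence the certified radius, defined as the largest $R$ satisfying the inequality, equals $0$ for deterministic inference.

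The main obstacle is that Theorem~\ref{theorem:fscr} is only a sufficient condition. Its failure does not by itself prove that no certificate exists, so the claim ``zero certified robustness'' needs a tightness argument. I would supply one directly at $k=N$. There $\phi(x;N)$ is deterministic, so $G=f$. By choosing $x'$ that differs from $x$ in one semantic token, we can take an arbitrary base classifier $f$ with $f(x)=c_A$ and $f(x')=c_B$. Then $G(x')\neq G(x)$, so no method can certify any $R\ge1$ uniformly over base classifiers. This matches the worst-case coupling bound and shows that the collapse is intrinsic, not a looseness of the inequality.
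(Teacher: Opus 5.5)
Your proposal is correct and follows the paper's reasoning. The coupling mass $\binom{N-R}{k}/\binom{N}{k}$ vanishes (exactly, as you note, once $k>N-R$), so the sufficient condition of Theorem~\ref{theorem:fscr} requires $\underline{p_A}-\overline{p_B}>1$, which no valid probability bounds can meet. Your extra argument that the collapse is intrinsic (at $k=N$ one has $G=f$, and some base classifier has $f(x)=c_A$, $f(x')=c_B$) is the $k=N$ case of the paper's Trojan-classifier argument in its ``Pointwise Tightness'' subsection; it makes explicit the sufficient-versus-necessary step that the proposition's one-line justification leaves implicit.
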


\begin{proposition}[The Sparse Regime]
  For $k \ll N$, we can approximate the Hypergeometric term via the Binomial limit: $\rho \approx (1 - R/N)^k$. The certificate holds if:$$ \underline{p_A} - \overline{p_B} > 2 - 2\left(1 - \frac{R}{N}\right)^k $$This implies that robustness is maximized when $k$ is small, provided that the base classifier $f$ is sufficiently aligned via NAAT to maintain a high margin $\underline{p_A} - \overline{p_B}$ on sparse inputs. If the model relies on dense context (large $k$), the exponent $k$ drives $\rho \to 0$, destroying the certificate. Thus, training for information sparsity is a prerequisite for certified safety.
\end{proposition}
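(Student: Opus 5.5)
The plan is to start from the condition actually established in the proof of Theorem~\ref{theorem:fscr}, namely $\underline{p_A} - \overline{p_B} > 2(1-\rho)$ with $\rho = \rho(N,R,k)$ the coupling mass of Lemma~\ref{lemma:common}. This is the form $2 - 2\rho$ appearing in the proposition. It is stronger than the displayed $1 - 2\rho$ in the theorem statement, so I will take $2(1-\rho)$ as the working certificate. The whole task then reduces to controlling $\rho = \binom{N-R}{k}/\binom{N}{k}$ in the regime $k \ll N$ and substituting that control into this inequality.

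\textbf{Step 1: product form of $\rho$.} Expand the binomial ratio as
\begin{equation*}
\rho = \prod_{j=0}^{k-1} \frac{N-R-j}{N-j} = \prod_{j=0}^{k-1}\left(1 - \frac{R}{N-j}\right).
\end{equation*}
This expresses drawing $k$ tokens without replacement, each draw avoiding the $R$ adversarial positions.

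\textbf{Step 2: two-sided Binomial bounds.} Each factor satisfies
\begin{equation*}
1 - \frac{R}{N-k+1} \le 1 - \frac{R}{N-j} \le 1 - \frac{R}{N}.
\end{equation*}
Taking products gives
\begin{equation*}
\left(1 - \frac{R}{N-k+1}\right)^k \le \rho \le \left(1 - \frac{R}{N}\right)^k.
\end{equation*}
The two bounds differ by a factor of at most $\exp\bigl(O(kR(k-1)/N^2)\bigr)$, which tends to $1$ when $k \ll N$ and $R$ is moderate. This justifies the approximation $\rho \approx (1-R/N)^k$, which is the sampling-with-replacement (Binomial) limit.

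\textbf{Step 3: substitute into the certificate.} Plugging the approximation into $\underline{p_A} - \overline{p_B} > 2 - 2\rho$ yields the displayed condition. The qualitative claims then follow from elementary observations. The map $k \mapsto (1-R/N)^k$ is strictly decreasing, so the required margin $2 - 2(1-R/N)^k$ increases with $k$ and approaches $2$ as $k$ grows. Small $k$ therefore minimizes the required margin, and the certificate remains achievable only if the base classifier's margin $\underline{p_A}-\overline{p_B}$ on sparse inputs stays high. That is the role assigned to NAAT, and it is an interpretive remark rather than something to prove.

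\textbf{Main obstacle: direction of the approximation.} Since $(1-R/N)^k$ is an upper bound on $\rho$, the displayed inequality is only approximately sufficient; strictly, it is slightly weaker than the true certificate. To state a rigorous sufficient condition, I would replace $(1-R/N)^k$ by the lower bound $(1 - R/(N-k+1))^k$ from Step 2. I would then note that the two conditions coincide to first order when $k^2 R / N^2 \to 0$, which is the precise content of the phrase ``$k \ll N$.''
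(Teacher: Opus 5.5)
Your proposal is correct and follows the same route as the paper, which simply substitutes the Binomial-limit approximation $\rho \approx (1-R/N)^k$ into the condition $\underline{p_A}-\overline{p_B} > 2(1-\rho)$ derived in the proof of Theorem~\ref{theorem:fscr}; you are right that this, not the $1-2\rho$ printed in the theorem statement, is what that argument yields. The paper gives no justification beyond naming the Binomial limit, so your sandwich $\left(1-\frac{R}{N-k+1}\right)^k \le \rho \le \left(1-\frac{R}{N}\right)^k$ (valid when $R \le N-k+1$, so that every factor is nonnegative) and your caveat that the displayed condition is only approximately sufficient, because $(1-R/N)^k$ overestimates $\rho$, add rigor the paper lacks rather than amounting to a different argument.
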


\subsection{Pointwise Tightness}
The bound in Theorem~\ref{theorem:fscr} is pointwise tight. To see this, consider a Trojan base classifier $f^*$ that behaves normally on benign data but outputs class $c_B$ if it detects a specific trigger token $t_{adv}$. If the adversary inserts $t_{adv}$ at $R$ positions, any ablation including any of these positions will result in class $c_B$. The probability of this occurring is exactly $1 - H(0; N, R, k)$. If the margin $\underline{p_A} - \overline{p_B}$ is not large enough to absorb this exact shift in probability mass, the majority vote will flip. Therefore, no tighter bound can be derived without making additional assumptions about the internal structure of the base classifier.

\begin{table*}[t]\centering
\begin{tabular}{c|c|c|c|M{2cm}|M{2cm}|M{2cm}}\toprule
Method & Attack Algorithm & ASR (\%$\downarrow$)   & Cert. Acc (\%$\uparrow$) & Avg. Cert. Radius (R) & Benign Acc (\%) & Inference Latency (s) \\\midrule
Vanilla                & GCG (White-box)  & 84.2         & 0.0              & 0.0                   & \textbf{96.5}   & 0.85                  \\
Vanilla                & TAP (Black-box)  & 68.9         & 0.0              & 0.0                   & \textbf{96.5}   & 0.85                  \\
Vanilla                & AutoDAN          & 72.1         & 0.0              & 0.0                   & \textbf{96.5}   & 0.85                  \\\midrule
PPL Filter             & GCG              & 45.3         & -                & -                     & 91.2            & 0.92                  \\
PPL Filter             & TAP              & 38.4         & -                & -                     & 91.2            & 0.92                  \\
PPL Filter             & AutoDAN          & 41.2         & -                & -                     & 91.2            & 0.92                  \\
Erase-Check            & GCG              & 12.4         & -                & -                     & 88.5            & 4.20                  \\
Erase-Check            & TAP              & 14.1         & -                & -                     & 88.5            & 4.20                  \\
SmoothLLM              & GCG              & 3.5          & 62.1             & 4.2                   & 74.3            & 2.10                  \\
SmoothLLM              & TAP              & 4.1          & 61.8             & 4.1                   & 74.3            & 2.10                  \\
SmoothLLM              & AutoDAN          & 8.2          & 58.4             & 3.8                   & 74.3            & 2.10                  \\
Self-Denoise           & GCG              & 18.9         & 24.5             & 1.2                   & 89.2            & 3.45                  \\
Self-Denoise           & TAP              & 22.1         & 23.8             & 1.1                   & 89.2            & 3.45                  \\\midrule
\textbf{CSS ($k=0.7$)} & \textbf{GCG}     & \textbf{1.2} & \textbf{89.4}    & \textbf{14.6}         & 94.1            & 1.95                  \\
\textbf{CSS ($k=0.7$)} & \textbf{TAP}     & \textbf{0.8} & \textbf{90.1}    & \textbf{15.2}         & 94.1            & 1.95                  \\
\textbf{CSS ($k=0.7$)} & \textbf{AutoDAN} & \textbf{1.5} & \textbf{88.7}    & \textbf{14.3}         & 94.1            & 1.95                  \\
\textbf{CSS ($k=0.5$)} & \textbf{GCG}     & \textbf{0.5} & \textbf{94.2}    & \textbf{18.4}         & 86.5            & 2.05                  \\
\textbf{CSS ($k=0.5$)} & \textbf{TAP}     & \textbf{0.3} & \textbf{94.8}    & \textbf{18.9}         & 86.5            & 2.05                  \\
\textbf{CSS ($k=0.5$)} & \textbf{AutoDAN} & \textbf{0.6} & \textbf{93.5}    & \textbf{18.1}         & 86.5            & 2.05                  \\
\textbf{CSS ($k=0.3$)} & \textbf{GCG}     & \textbf{0.1} & \textbf{97.1}    & \textbf{22.5}         & 68.2            & 2.15                  \\
\textbf{CSS ($k=0.3$)} & \textbf{TAP}     & \textbf{0.1} & \textbf{97.4}    & \textbf{22.9}         & 68.2            & 2.15                 \\\bottomrule
\end{tabular}
\caption{Comparative Analysis of Attack Success Rate (ASR) and Certification.
We report ASR (lower is better) and Certified Accuracy (higher is better). The Radius column denotes the average $\ell_0$ token edits certified.\label{tab:main}}
\end{table*}

\begin{table*}[t]\centering
\begin{tabular}{c|M{1.8cm}|M{1.8cm}|M{1.8cm}|M{1cm}|M{1cm}|M{1cm}|M{2cm}}\toprule
Harm Category           & SmoothLLM (R=1) & SmoothLLM (R=3) & SmoothLLM (R=5) & CSS (R=1)     & CSS (R=3)     & CSS (R=5)     & Improvement (R=5) \\\midrule
Harassment/Threats      & 78.4            & 42.1            & 12.5            & \textbf{94.2} & \textbf{88.5} & \textbf{76.2} & +63.7             \\
Hate Speech             & 81.2            & 45.3            & 14.2            & \textbf{96.1} & \textbf{91.0} & \textbf{82.4} & +68.2             \\
Sexual Content          & 72.5            & 38.6            & 10.1            & \textbf{92.8} & \textbf{85.3} & \textbf{71.9} & +61.8             \\
Self-Harm               & 85.0            & 50.1            & 16.8            & \textbf{98.0} & \textbf{94.2} & \textbf{88.1} & +71.3             \\
Violence Incitement     & 76.1            & 41.2            & 11.4            & \textbf{93.5} & \textbf{87.1} & \textbf{75.8} & +64.4             \\
Bullying                & 79.3            & 44.0            & 13.5            & \textbf{95.2} & \textbf{89.4} & \textbf{79.5} & +66.0             \\
Malware Generation      & 65.4            & 28.9            & 5.2             & \textbf{89.1} & \textbf{81.6} & \textbf{68.2} & +63.0             \\
Phishing/Social Eng.    & 68.2            & 31.4            & 6.8             & \textbf{90.5} & \textbf{83.2} & \textbf{70.4} & +63.6             \\
Fraud/Scamming          & 71.8            & 36.7            & 8.4             & \textbf{91.8} & \textbf{84.8} & \textbf{73.1} & +64.7             \\
Disinformation          & 74.5            & 40.5            & 11.2            & \textbf{93.0} & \textbf{86.5} & \textbf{74.9} & +63.7             \\
Political Propaganda    & 73.1            & 39.2            & 10.5            & \textbf{92.4} & \textbf{85.9} & \textbf{73.8} & +63.3             \\
Weapon Mfg.             & 62.9            & 25.4            & 4.1             & \textbf{88.6} & \textbf{80.4} & \textbf{66.7} & +62.6             \\
Drug Mfg.               & 64.2            & 27.1            & 4.5             & \textbf{89.3} & \textbf{81.0} & \textbf{67.5} & +63.0             \\
Chemical Synthesis      & 61.5            & 24.8            & 3.8             & \textbf{87.9} & \textbf{79.5} & \textbf{65.1} & +61.3             \\
Copyright Violation     & 82.6            & 48.5            & 15.6            & \textbf{96.5} & \textbf{92.1} & \textbf{84.3} & +68.7             \\
PII Leakage             & 70.4            & 34.2            & 8.9             & \textbf{91.2} & \textbf{83.9} & \textbf{71.6} & +62.7             \\
Jailbreak Templates     & 58.7            & 21.3            & 2.5             & \textbf{85.4} & \textbf{76.2} & \textbf{60.8} & +58.3             \\
Code Injection          & 66.8            & 29.8            & 5.9             & \textbf{90.1} & \textbf{82.5} & \textbf{69.4} & +63.5             \\
Financial Advice        & 75.9            & 41.8            & 11.8            & \textbf{93.8} & \textbf{87.6} & \textbf{76.5} & +64.7             \\
Medical Advice          & 77.2            & 43.5            & 12.1            & \textbf{94.5} & \textbf{88.2} & \textbf{77.8} & +65.7             \\\midrule
\textbf{Global Average} & \textbf{72.1}   & \textbf{37.8}   & \textbf{9.2}    & \textbf{92.1} & \textbf{85.2} & \textbf{73.5} & \textbf{+64.3}   \\\bottomrule
\end{tabular}
\caption{Certified Accuracy Breakdown by Harm Category and Radius ($R$).
Columns compare SmoothLLM vs CSS ($k=0.7$) across increasing adversarial budgets ($R$).\label{tab:certified_acc}}
\end{table*}
\section{Experimental Setup}
To ensure the reproducibility of our results and provide a rigorous benchmark for future certified safety research, we strictly detail our experimental framework below. All code, model checkpoints, and synthesized datasets are available in the supplementary material to facilitate independent verification.

Datasets and Benchmarks. We evaluated Certified Semantic Smoothing (CSS) across three distinct axes: adversarial robustness, certified safety, and benign utility. First, we utilized JailbreakBench (JBB-Behaviors)~\cite{chao2024jailbreakbench}, specifically the Harmful Behaviors subset consisting of 100 distinct misuse categories (e.g., ``bomb making'', ``hate speech''). This dataset serves as our primary benchmark for evaluating Attack Success Rate (ASR) due to its curated high-quality labels and diversity. Second, to rigorously test generalization, we employed the AdvBench subset of 520 harmful instructions. We performed a strict 80/20 split, using the training portion solely for the Noise-Augmented Alignment Tuning (NAAT) phase and reserving the test portion for out-of-distribution evaluation. Finally, to quantify the ``alignment tax'' the impact of our smoothing mechanism on benign utility—we evaluated the models on the 5-shot MMLU (Massive Multitask Language Understanding) test set, covering 57 subjects across STEM, Humanities, and Social Sciences. All prompts were standardized using the chat templates corresponding to the base model (e.g., \texttt{[INST]} ... \texttt{[/INST]} for Llama-3), and no additional data augmentation was applied during the testing phase.

We report performance using a multi-faceted set of metrics. Attack Success Rate (ASR) measures the percentage of adversarial prompts that successfully elicit a harmful response.3 Success was determined via the LLM-as-a-Judge paradigm, utilizing a fine-tuned Llama-Guard-3 classifier to categorize responses as \textit{Refusal} or \textit{Compliance}, validated by heuristic substring matching. Certified Accuracy (CA@R) is our primary theoretical metric, representing the fraction of test examples for which the smoothed classifier is guaranteed to predict the correct label (Refusal) for any perturbation within an $\ell_0$ radius of $R$ semantic tokens. Average Certified Radius (ACR) denotes the mean size of the certified $\ell_0$ ball within which the model is guaranteed to remain safe. Finally, Benign Accuracy on MMLU serves as a proxy for the model's general reasoning capabilities under smoothing.

We compared CSS against a comprehensive suite of state-of-the-art defenses to establish its relative efficacy. We included SmoothLLM, the leading certified defense utilizing character-level noise (insertion/swap/deletion), with noise levels calibrated to match our token retention rates. We also evaluated the Perplexity (PPL) Filter~\cite{alon2023detecting}, a detection-based method rejecting inputs with perplexity exceeding the 95th percentile of benign data, and Erase-and-Check, a heuristic defense that iteratively ablates tokens to check for safety violations. Self-Denoising was included as a reconstruction-based empirical baseline, alongside the undefended Vanilla Llama-3-8B-Instruct model~\cite{grattafiori2024llama}. We also involve a battery of state-of-the-art attacks, including GCG (Gradient-based), TAP~\cite{mehrotra2024tree}, and AutoDAN~\cite{liuautodan}. 

All experiments utilized Llama-3-8B-Instruct as the base estimator. For the NAAT optimization, we fine-tuned the model using LoRA (Rank=64, Alpha=16) with the AdamW optimizer ($\beta_1=0.9, \beta_2=0.95$). The learning rate followed a cosine decay schedule peaking at $2 \times 10^{-4}$ after 100 warmup steps. Training was conducted for 3 epochs with a batch size of 128. For the smoothing phase, we employed $N=100,000$ Monte Carlo samples to derive certification bounds with a significance level of $\alpha=0.001$ (99.9\% confidence). The retention rate $k$ was tuned via grid search, sweeping $k \in [0.3L, 0.7L]$. All computations were performed on a high-performance cluster of $8\times$ NVIDIA H100 (80GB) GPUs, requiring approximately 450 GPU-hours for the full certification sweep. 

\section{Experimental Results}
We present a comprehensive empirical analysis of Certified Semantic Smoothing (CSS), demonstrating its superiority over character-level baselines in both empirical robustness and theoretical certification.

Table~\ref{tab:main} serves as the primary empirical benchmark for the study, establishing the critical vulnerability of standard Large Language Models while delineating the performance gap between heuristic fixes and true certifiable robustness. The data reveals that the undefended vanilla model is catastrophically fragile against gradient-based optimization, suffering an Attack Success Rate (ASR) of 84.2\% against the GCG adversary and 72.1\% against AutoDAN. Although heuristic defenses such as PPL Filters provide marginal improvements by reducing GCG success to 45.3\%, they universally fail to offer any theoretical guarantee, yielding a Certified Accuracy of 0.0\% across all tested attacks. This inability to secure the model against adaptive threats underscores the fundamental limitations of empirical resilience, which relies on a ``cat-and-mouse'' dynamic rather than mathematical certainty.

The comparison with existing certified baselines highlights the severe alignment tax traditionally associated with robust defenses, specifically the trade-off between safety and general utility. While the leading baseline, SmoothLLM, successfully mitigates empirical attacks—lowering the ASR to 3.5\% against GCG—it does so by degrading the model's semantic coherence, resulting in a steep drop in Benign Accuracy to 74.3\% compared to the vanilla baseline of 96.5\%. Furthermore, despite its empirical success, SmoothLLM struggles to provide rigorous guarantees, achieving a Certified Accuracy of only 4.2\% with a minimal Average Certified Radius of 2.10 tokens. This demonstrates that character-level noise mechanisms largely destroy the input's utility before a meaningful safety certificate can be established, validating the critique of continuous smoothing methods applied to discrete domains.

In contrast, the proposed Certified Semantic Smoothing (CSS) framework demonstrates a superior capacity to decouple safety from performance degradation, effectively resolving the inverted scaling fallacy. At the optimal retention rate of $k=0.7$, the CSS method achieves a near-perfect defense with an ASR of just 1.2\% against GCG, while simultaneously maintaining a Benign Accuracy of 94.1\%, which is statistically indistinguishable from the undefended model. Theoretically, this configuration provides a robust Average Certified Radius of 14.6 tokens and a Certified Accuracy of 94.1\%, vastly outperforming the negligible certification rates of prior methods8. These results confirm that by training the model as a semantic denoiser via NAAT, the framework can secure the model against worst-case perturbations without incurring the utility costs that render other certified defenses impractical.

\subsection{Certified Accuracy Breakdown by Harm Category and Radius (R)}
Table~\ref{tab:certified_acc} provides a granular, stress-test analysis of the defense's stability, decomposing performance across seventeen distinct harm categories to measure resilience against increasingly powerful adversaries. The data is organized to show the degradation of safety guarantees as the perturbation budget (Radius $R$) increases from a single token edit ($R=1$) to five token edits ($R=5$). This structure serves to validate the paper's core hypothesis regarding the fragility of character-level smoothing versus the robustness of token-level ablation. The broad trend visible across all rows is the catastrophic collapse of the baseline SmoothLLM; its Global Average certified accuracy starts at a modest 37.8\% at $R=1$ but plummets to a negligible 9.2\% at $R=5$, representing a near-total failure of the defense when the attacker has even a small budget.

The quantitative limitations of existing methods are starkly evident in the performance decay of SmoothLLM. While the baseline provides modest protection at a radius of one token ($R=1$) with a Global Average certified accuracy of 37.8\%, its performance plummets to a negligible 9.2\% at $R=5$. This represents a near-total failure of the defense when the adversary possesses even a small budget for token manipulation. In contrast, the Certified Semantic Smoothing (CSS) framework demonstrates superior scaling, maintaining a robust Global Average certified accuracy of 73.5\% at $R=5$. This constitutes a substantial 64.3 percentage point improvement over the baseline, proving that the token-level approach effectively preserves safety guarantees even as the attack surface expands.

Drilling down into specific harm categories, the results uncover a distinct semantic hierarchy in certifiability, distinguishing between broad semantic violations and hard technical threats. The defense exhibits exceptional vigor against generalized harms; for instance, ``Self-Harm'' and ``Hate Speech'' retain high certified accuracies of 88.1\% and 82.4\% respectively at the maximum tested radius. However, the model faces greater challenges with instruction-heavy or keyword-dependent categories. ``Jailbreak Templates'' and ``Chemical Synthesis'' represent the lower bound of performance, achieving certified accuracies of 60.8\% and 65.1\% respectively. This variance suggests that ablating specific technical keywords makes intent detection more difficult in specialized domains, though CSS still maintains a significant margin of safety over the baseline even in these worst-case scenarios.
\subsection{Ablation Studies}
\begin{table}[htbp]\centering
\resizebox{0.48\textwidth}{!}{
\begin{tabular}{M{1.3cm}|M{2cm}|M{1.3cm}|M{1.3cm}|M{1.4cm}}\toprule
Retention Rate (k/L)      & Median Cert. Radius (R) & Cert. Acc @ R=5 & MMLU Accuracy   & Inference Latency (ms) \\\midrule
\textbf{0.1 (Sparse)}     & 28                      & 98.5\%          & 34.2\%          & \textbf{45}            \\\hline
\textbf{0.2}              & 24                      & 96.2\%          & 48.6\%          & 52                     \\\hline
\textbf{0.3}              & 21                      & 94.5\%          & 56.1\%          & 61                     \\\hline
\textbf{0.4}              & 18                      & 91.5\%          & 62.4\%          & 70                     \\\hline
\textbf{0.5}              & 15                      & 88.2\%          & 65.8\%          & 79                     \\\hline
\textbf{0.6}              & 12                      & 82.1\%          & \textbf{67.2\%} & 88                     \\\hline
\textbf{0.7 (Sweet Spot)} & 8                       & 73.5\%          & 67.9\%          & 98                     \\\hline
\textbf{0.8}              & 4                       & 41.5\%          & 68.1\%          & 109                    \\\hline
\textbf{0.9 (Dense)}      & 1                       & 12.8\%          & 68.3\%          & 121                   \\\bottomrule
\end{tabular}}
\caption{Ablation of Retention Rate ($k$) on Llama-3-8B.
Radius $R$ represents the median certified $\ell_0$ radius. MMLU Acc is the benign utility.\label{tab:ablation}}
\end{table}

Table~\ref{tab:ablation} serves as the pivotal empirical validation for the ``Inverted Scaling'' phenomenon, quantifying the fundamental tension between theoretical safety and practical utility. By performing a parameter sweep of the retention rate ($k$) from sparse to dense contexts, the data maps the ``No Free Lunch'' dynamic inherent in certified defenses, where maximizing one metric often necessitates the sacrifice of the other. This ablation study is essential for characterizing the Sparsity-Robustness Trade-off (SRT)~\cite{steunou2025sparse}, demonstrating that as the system processes fewer tokens to statistically exclude adversarial triggers, it simultaneously starves the model of the semantic context required for complex reasoning.

The boundaries of this trade-off are sharply defined by the performance metrics at the extreme ends of the retention spectrum. At the sparse extreme ($k=0.1$), the model achieves an impressive Median Certified Radius of 28 tokens, theoretically rendering it immune to massive perturbations. However, this safety comes at a catastrophic cost, collapsing the MMLU accuracy to 34.2\% and rendering the model functionally incompetent for reasoning tasks. Conversely, at the dense extreme ($k=0.9$), while the model recovers its high utility with an MMLU score of 68.3\%, the safety guarantee evaporates, shrinking the Certified Radius to just a single token and offering virtually no protection against adaptive attacks.

The experimental results ultimately identify a distinct operational ``Sweet Spot'' that resolves this optimization problem effectively. The configuration of $k=0.7$ successfully balances the conflicting objectives, securing a robust Median Certified Radius of 8 tokens while preserving an MMLU accuracy of 67.9\%, which is statistically comparable to the baseline performance. This finding empirically validates the efficacy of Noise-Augmented Alignment Tuning (NAAT), confirming that by explicitly training the model to handle information sparsity, it is possible to achieve meaningful certification without paying the prohibitive alignment tax typically associated with robust defense layers.

\section{Conclusions}
This research addresses the critical vulnerability of Large Language Models to adaptive adversarial attacks by establishing a framework for certifiable robustness in the discrete token domain. To counter the fragility of empirical defenses against gradient-based optimizers, the study introduces Certified Semantic Smoothing (CSS) via Stratified Randomized Ablation. Unlike continuous methods ill-suited for language, this approach leverages the Hypergeometric distribution to provide rigorous guarantees against $l_{0}$ norm perturbations, ensuring safety is mathematically certified rather than merely observed.
To make this theoretical certification practically viable, the framework incorporates Noise-Augmented Alignment Tuning (NAAT) to resolve the inverted scaling trade-off where sparse inputs typically degrade model utility. By training the model to function as a semantic denoiser, NAAT aligns the base representation with the smoothing distribution, allowing it to reconstruct intent from partial evidence and significantly outperform character-level baselines like SmoothLLM. This ensures that high levels of certified safety do not come at the cost of the model's general reasoning capabilities. By providing a deterministic certificate that guarantees a model remains safe for all adversarial variants within a provable radius, CSS offers a necessary foundation for deploying LLMs in safety-critical environments. This establishes a verifiable defense layer that remains robust regardless of the specific optimization strategy employed by an attacker.
\bibliography{aaai25}
\end{document}